\documentclass{article}

 \usepackage[final]{neurips_2024}

\usepackage[utf8]{inputenc} % allow utf-8 input
\usepackage[T1]{fontenc}    % use 8-bit T1 fonts
\usepackage{hyperref}       % hyperlinks
\usepackage{url}            % simple URL typesetting
\usepackage{booktabs}       % professional-quality tables
\usepackage{amsfonts}       % blackboard math symbols
\usepackage{nicefrac}       % compact symbols for 1/2, etc.
\usepackage{microtype}      % microtypography
\usepackage{xcolor}         % colors

\usepackage{amsmath}
\usepackage{bm}
\usepackage{mathtools}
\usepackage{amsthm}
\newtheorem{definition}{Definition}
\newtheorem{theorem}{Theorem}
\usepackage{physics}
\usepackage{cases}

\usepackage{amssymb}
\usepackage{xcolor}
\definecolor{lightergray}{gray}{0.9}
\usepackage{mdframed}
\mdfsetup{skipabove=10pt,skipbelow=2pt}

\title{Predictive Coding Graphs are a Superset of Feedforward Neural Networks}

\author{%
  Bj\"orn van Zwol
    \\
  Department of Information \& Computing Sciences, Utrecht University\\
  Utrecht, The Netherlands\\
  \texttt{bjornvanzwol@gmail.com} \\
}

\begin{document}

\maketitle

\begin{abstract}
Predictive coding graphs (PCGs) are a recently introduced generalization to predictive coding networks, a neuroscience-inspired probabilistic latent variable model. Here, we prove how PCGs define a mathematical superset of feedforward artificial neural networks (multilayer perceptrons). This positions PCNs more strongly within contemporary machine learning (ML), and reinforces earlier proposals to study the use of non-hierarchical neural networks for ML tasks, and more generally the notion of topology in neural networks.
\end{abstract}

\begin{figure}[h!]
\centering
\includegraphics[width=0.85\textwidth]{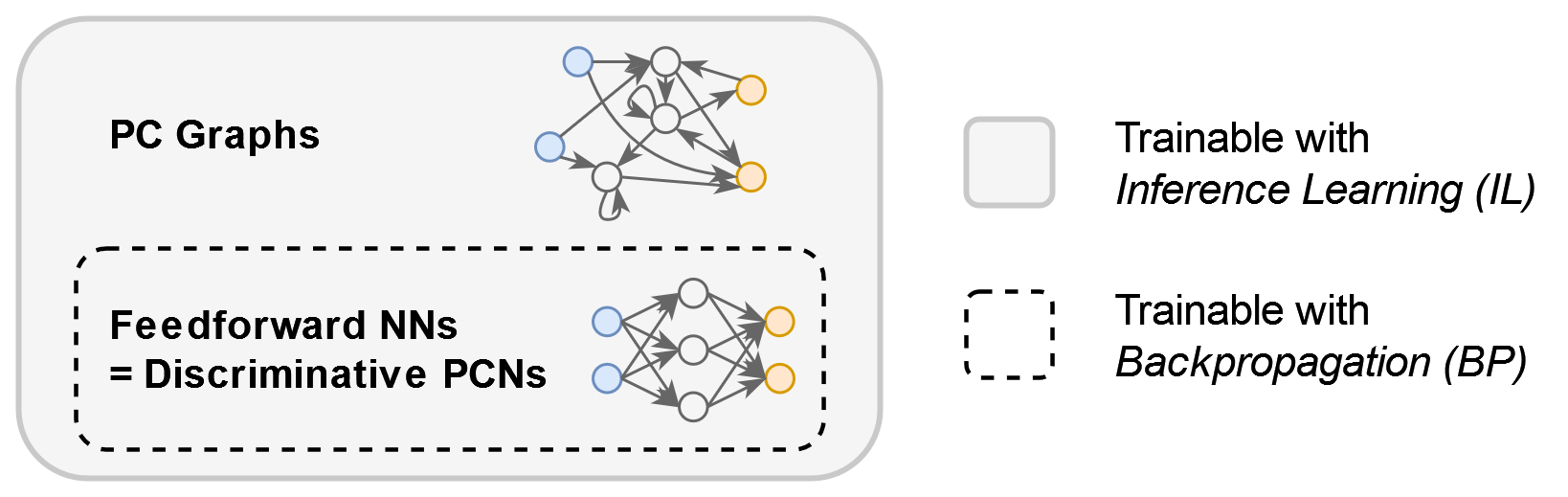}
  \caption{PCGs trained with IL generalize the structure of FNNs to arbitrary graphs, including loops and non-hierarchical structures, which are not trainable using BP.}
\label{fig:overview}
\end{figure}
\section{Introduction}
Predictive coding networks (PCNs), based on the neuroscientific framework predictive coding, have recently gained attention in machine learning (ML) \cite{van_zwol_predictive_2024,salvatori_brain-inspired_2023} for their increased biological plausibility compared to backpropagation (BP) \cite{rumelhart_learning_1986,golkar_constrained_2022,song_inferring_2024}, their parallelizability \cite{salvatori_stable_2024} and potential for probabilistic/generative modeling \cite{marino_predictive_2022,ororbia_neural_2022,zahid_sample_2024}. These networks can also be extended to arbitrary topologies, called predictive coding graphs (PCGs) \cite{salvatori_learning_2022}. 

When applied to supervised learning, standard (hierarchical) PCNs {have the same outputs} as traditional feedforward neural networks (FNNs, or multilayer perceptrons) during {testing} (i.e. inference) \cite{whittington_approximation_2017,frieder_non-convergence_2022}. This enables direct comparison of BP with PCN's training phase, also called inference learning (IL) -- which has been the focus of most recent work on PCNs in ML \cite{song_can_2020,rosenbaum_relationship_2022,salvatori_reverse_2022,innocenti_understanding_2023}.
 In this work we revisit the {testing} (inference) phase, and formally prove that {PCGs are a superset of feedforward neural networks}. This follows from the combination of two insights.

The first is that discriminative PCNs are \textit{equivalent} to FNNs during testing, for which we provide a simple proof. This is a stronger variant of related statements in the literature which show how PCNs \textit{converge to} an FNNs' computations \cite{whittington_approximation_2017,frieder_non-convergence_2022}. This subtle but important reframing enables the formal statement that the \textit{universal approximation theorem} (UAT) \cite{cybenko_approximation_1989} holds also for PCNs. The UAT is a foundational result which historically provided strong theoretical justification for using FNNs \cite{prince_understanding_2023}. {Its applicability to PCNs however, although generally believed in the PC community, lacked a principled formal proof. To the author's best knowledge, this is provided here for the first time, by virtue of the PCN-FNN testing equivalence.}

Our second insight is that PCGs define a mathematical superset of PCNs, which we also prove formally. {In ref. \cite{salvatori_learning_2022} it was well-illustrated how hierarchical structures could be obtained with PCGs by masking weights. However, precisely how such masked PCGs relate to PCNs and FNNs was unclear (i.e. their cost functions and dynamics), since a detailed formal analysis was lacking. Here, we prove that a PCG with a particular choice of weight matrix is exactly equivalent to a PCN, both in structure and dynamics.}

Combining these two insights leads to the non-trivial conclusion that PCGs should be understood as a structural superset of FNNs that includes FNNs as a special case, as visualized in fig. \ref{fig:overview}. {We believe this result substantially clarifies the relation between PC and traditional neural networks, as recently argued in \cite{van_zwol_predictive_2024}.} Moreover, understanding PCGs in this way is potentially very interesting given the topological nature of important advances in ML in the past, such as residual/skip connections \cite{he_deep_2016,zagoruyko_wide_2016,orhan_skip_2018}. Thus, we underscore the point made by \cite{salvatori_learning_2022}: PCGs are a promising framework for studying the importance of {network topology} in ML tasks. Finally, our work also highlights the value of mathematical studies of PCNs, complementary to experimental approaches more commonly found in the literature.

% Like feedforward neural networks (FNNs), PCNs involve an activity rule and a learning rule, where training involves both and testing only the former. 

% Second, we prove that PCGs are mathematical supersets of hierarchical PCNs.
% because one can prove that the solution to PCN's activity rule is exactly equivalent to FNN's activity rule. This result has already been used e.g. by , but we provide a proof for completeness, formally justifying the statement that FNNs are equivalent to PCNs during testing.
\section{Results}
We first prove how PCNs are FNNs during testing, followed by how PCNs are subsets of PCGs. The problem setup is as follows: we are given a dataset of $N$ labeled samples $\{\bm{x}^{(n)},\bm{y}^{(n)}\}_{n=1}^N$ split into a training set and a test set, where $\bm{x}^{(n)}\in \mathbb{R}^{n_x}$ is a datapoint with dimension $n_x$ and $\bm{y}^{(n)} \in \mathbb{R}^{n_y}$ its corresponding label with dimension $n_y$.
\subsection{PCNs are FNNs during testing}
We provide definitions and state the first result. We separate a neural network's \emph{activity rule} (i.e. changing activity of nodes, operating during training and testing) and \emph{learning rule} (changing weights, operating only during training) \cite{mackay_information_2003}.

% \begin{mdframed}
    \begin{definition}
    \setcounter{footnote}{1}% Reset the footnote counter
A FNN is defined by a set of nodes $a_i^\ell\in\mathbb{R}$ in layers with $n_\ell\in\mathbb{N}^+$ nodes, with $0\leq\ell\leq L$, $1\leq i(\ell) \leq n_\ell$ and $n_0=n_x$.\footnote{The index $i$ for FNNs/PCNs is always defined with respect to layer $\ell$, but we leave out this dependence henceforth for simplicity.} Its \emph{activity rule} is:
\begin{equation}
\begin{aligned}
0< \ell\leq L,\,\forall i:\quad &a^\ell_i=f\Big(\sum_j w^{\ell-1}_{ij}a^{\ell-1}_j\Big),
\end{aligned}
   \label{eq:activity_FNN}
\end{equation}
with $\forall i,\,a^0_i=x_i$, and where $w_{ij}^\ell \in \mathbb{R},\,0\leq\ell<L$ are the weights, and $f$ is a non-linear, element-wise activation function.
\end{definition}
% \end{mdframed}
Note that for FNNs, the learning rule is usually defined separate from the network definition, i.e. the canonical training algorithm backpropagation (BP) defines the learning rule:
$\Delta w_{ij}^\ell \propto {\partial\mathcal{L}}/{\partial w_{ij}^\ell}$
where $\mathcal{L}$ is some loss function. For PCNs, however, the learning rule is typically understood to be part of the network definition itself. 
% \begin{mdframed}
\begin{definition}
A PCN is defined by a set of nodes $a_i^\ell\in\mathbb{R}$ in layers with $n_\ell\in\mathbb{N}^+$ nodes, with $0\leq\ell\leq L$, $1\leq i \leq n_\ell$ and $n_0=n_x$. Its \emph{energy} is $E_N=\sum_{\ell=1}^L\sum_{i=1}^{n_{\ell}}(\epsilon^\ell_i)^2$, with $\epsilon^\ell_i = a^\ell_i-\mu^\ell_i$ and  $\mu^\ell_i=f\Big(\sum_{j=1}^{n_{\ell-1}} w^{\ell-1}_{ij}a^{\ell-1}_j\Big)$, where $w_{ij}^\ell \in \mathbb{R},\, 0\leq\ell<L$ are the weights, and $f$ is a non-linear element-wise activation function. Its \emph{activity rule} is (using hats for minimized values):
\begin{align}
% \ell=0,\,\forall i:\quad&a^0_i=x_i \\
% \ell=L,\,\forall i:\quad & a^L_i=y_i\\
0<\ell<L,\,\forall i:\quad&\hat{a}_i^\ell=\underset{{a_i^\ell}}{\operatorname{argmin}}\,E_N & \emph{(training)}    \label{eq:activity_PCN_training} \\
0< \ell\leq L,\,\forall i:\quad&\hat{a}_i^\ell=\underset{{a_i^\ell}}{\operatorname{argmin}}\,E_N & \emph{(testing),}    \label{eq:activity_PCN_testing}
\end{align}
with $\forall i,\,a^0_i=x_i$, and during training $\forall i,a^L_i=y_i$. The \emph{learning rule} is:
\begin{equation}
\begin{aligned}
%  \ell=0,\,\forall i:\quad & a^0_i=x_i\\
% \ell=L,\,\forall i:\quad & a^L_i=y_i\\
0\leq\ell<L,\,\forall i,j:\quad &\hat{w}_{ij}^\ell=\underset{w_{ij}^\ell}{\operatorname{argmin}}\,E_N  & \emph{(training),}
\end{aligned}
\end{equation}
with $\forall i: a^0_i=x_i,\,a^L_i=y_i$.
\end{definition}
% \end{mdframed}
% \begin{mdframed}

\begin{theorem}\label{thm:thm1}
During testing, a PCN {is equivalent to} an FNN.
\end{theorem}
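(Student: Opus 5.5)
The plan is to show that, during testing, the joint minimizer of $E_N$ over all free activities $a^\ell_i$, $0<\ell\leq L$, is exactly the FNN activity \eqref{eq:activity_FNN}. I will first make precise what ``equivalent'' means here. With $a^0_i=x_i$ clamped and identical weights $w^\ell_{ij}$ in both networks, the testing activity rule \eqref{eq:activity_PCN_testing} should produce $\hat a^\ell_i$ equal to the FNN values $a^\ell_i$ for every layer, and in particular at the output layer $\ell=L$. The argument then splits into showing that the FNN configuration is a minimizer, and that it is the only one.

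\textbf{Step 1 (lower bound).} Since $E_N=\sum_{\ell=1}^L\sum_i(\epsilon^\ell_i)^2$ is a sum of squares, $E_N\geq 0$ for every choice of activities.

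\textbf{Step 2 (attainment).} I will construct the FNN configuration layer by layer and check that it makes every error vanish. Take $a^0=x$ and set $a^\ell_i=f(\sum_j w^{\ell-1}_{ij}a^{\ell-1}_j)$ recursively for $\ell=1,\dots,L$. By construction $a^\ell_i=\mu^\ell_i$, so $\epsilon^\ell_i=0$ for all $\ell,i$, and hence $E_N=0$. Combined with Step 1, this configuration is a global minimizer. This step needs that during testing the top layer $L$ is free rather than clamped to $y$. If it were clamped, the last error $\epsilon^L$ could not in general be set to zero, which is exactly why the training rule \eqref{eq:activity_PCN_training} behaves differently.

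\textbf{Step 3 (uniqueness).} This is the step I expect to be the main obstacle, because the argmin must be a unique value for the statement to be an equivalence and not merely convergence. Suppose some configuration $\hat a$ attains the minimum. The minimum is $0$, so every $\epsilon^\ell_i$ must vanish. I will then argue by induction on $\ell$. At $\ell=1$, $\mu^1$ depends only on the clamped $x$, so $\hat a^1=\mu^1(x)$ is forced. If $\hat a^{\ell-1}$ equals the FNN value, then $\mu^\ell$ is determined, and $\epsilon^\ell=0$ forces $\hat a^\ell$ to equal the FNN value as well. The subtle point is that the definition writes the argmin coordinatewise, ``$\operatorname{argmin}_{a_i^\ell}E_N$''. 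I will read this as joint minimization over all free nodes, equivalently as a simultaneous coordinatewise stationary point. I will also check that the FNN configuration is the unique point at which every coordinatewise minimization holds at once.

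For that check I set $\partial E_N/\partial a^\ell_i=0$. This gives $\epsilon^\ell_i=\sum_k \epsilon^{\ell+1}_k f'(\cdot)w^\ell_{ki}$, with $\epsilon^{L+1}\equiv 0$. Working top-down, $\epsilon^L=0$ because $a^L$ appears only in its own squared error. Substituting downward then gives $\epsilon^\ell=0$ for every $\ell$, which again forces the FNN values. If differentiability of $f$ is not assumed, I fall back on the joint-minimization reading, where Steps 1--2 and the induction suffice. Together these steps establish the equivalence, and so the FNN outputs, and results about them such as the UAT, carry over to PCNs during testing.
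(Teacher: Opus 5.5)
Your proof is correct, and its main line (Steps 1--3) differs from the paper's. The paper works only through the first-order conditions. It sets $\partial E_N/\partial a^\ell_i=0$, reads off $\epsilon^L_i=0$ at the top layer, and propagates $\epsilon^\ell_i=0$ downward by backward induction. That is exactly what you do in your final paragraph for the coordinatewise reading. Your primary argument instead uses the sum-of-squares form: $E_N\geq 0$, the FNN configuration attains $0$, so every minimizer has all $\epsilon^\ell_i=0$, and forward induction from the clamped $a^0=x$ then fixes every layer. This route needs no differentiability of $f$. It also makes explicit two things the paper leaves implicit: that the stationary point it finds is actually the global minimizer (immediate once one notes $E_N=0$ there), and that the conditions $\epsilon^\ell=0$ determine the activities uniquely. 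The paper's route gives, for differentiable $f$, the stronger fact that the FNN configuration is the \emph{only} stationary point of $E_N$ in the free activities, so there are no spurious local minima or saddles. That is what ties the result to gradient-based inference and covers the coordinatewise reading of the argmin. Your caveat about that reading is well founded: for non-differentiable $f$, a point that is coordinatewise minimal in every activity need not have vanishing errors, so falling back to joint minimization in that case is the right choice.
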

% \end{mdframed}
We prove this in appendix \ref{sec:proof_thm1}, and provide an intuition here. Testing means that only the activity rule is relevant, and that $\forall i:a^0_i=x_i$. Equivalence between an FNN and PCN then means that \eqref{eq:activity_FNN} is equivalent to \eqref{eq:activity_PCN_testing}:
\begin{equation}
   % \forall \ell,i:\quad \hat{a}_i^\ell=\underset{{a_i^\ell}}{\operatorname{argmin}}\,E, \iff  a^\ell_i=\sum_j w^{\ell-1}_{ij}f(a^{\ell-1}_j).
     \ell>0,\,\forall i:\quad \hat{a}_i^\ell=\underset{{a_i^\ell}}{\operatorname{argmin}}\,E_N \iff  a^\ell_i=f\Big(\sum_j w^{\ell-1}_{ij}a^{\ell-1}_j\Big).
   \label{eq:equivalence}
\end{equation}
% The PCN energy is:
% \begin{equation}
%     E_N  
%     % \frac{1}{2} \sum_{\ell=1}^L\sum_{i=1}^{n_{\ell}}  (\epsilon_i^\ell)^2 
%     =\frac{1}{2} \sum_{\ell=1}^L\sum_{i=1}^{n_{\ell}}\Big[ a_i^\ell- f\Big(\sum_{j=1}^{n_{\ell-1}} w^{\ell-1}_{ij} a^{\ell-1}_{j}\Big)\Big]^2.
% \end{equation}
% Observe that the lowest layer $\ell=0$ has no error term.
To prove this, start from the left hand side, and find the minimum of $E_N$ by taking derivatives w.r.t. ${a}_i^\ell$, and setting to zero. This yields the following system of equations (cf. appendix \ref{sec:appendix_derivations}):
\begin{numcases}{ \pdv{E_N}{a_i^\ell}=}
\epsilon^\ell_i-\sum_{j=1}^{n_{\ell}} w^\ell_{ji}\epsilon_j^{\ell+1}f'\Big(\sum_m w^{\ell}_{jm}a^\ell_m\Big)=0 &$\text{if}\quad 1\leq\ell < L$ \label{eq:SOE_compact_1} \\
\epsilon^L_i=0  &$\text{if}\quad \ell=L$ \label{eq:SOE_compact_2}
\end{numcases}
This may be solved by seeing that since $\epsilon_i^L=0$ by \eqref{eq:SOE_compact_2}, $\epsilon_i^{L-1}=0$ by  \eqref{eq:SOE_compact_1} -- an argument which can be continued for each layer until $\epsilon_i^1=0$. Then, by definition of $\epsilon^\ell_i$, one has the right hand side of \ref{eq:equivalence}. This can be formalized by backwards induction.

Our proof is simpler than similar proofs in the literature that we are aware of \cite{song_predictive_2021,salvatori_reverse_2022,frieder_non-convergence_2022}, since we employ a more general definition of the activity rule. Instead of \eqref{eq:activity_PCN_training}, \eqref{eq:activity_PCN_testing}, these works define the activity rule of PCNs as $\Delta {a}_i \propto {\partial E}/{\partial a_i}$, which is shown to converge to \eqref{eq:activity_FNN}. This however {assumes} gradient-based dynamics, which we argue is unnecessary using the more principled definition of IL as Expectation Maximization (as discussed e.g. in \cite{van_zwol_predictive_2024}). In public PCN implementations \cite{tschantz_infer-activelypypc_2023}, \eqref{eq:activity_FNN} was already used during testing, a practice we have now given a more principled justification.
% For practical purposes, the conclusion is the same, however: one should simply use \eqref{eq:activity_FNN} during inference in a PCN since it is more computationally efficient.

A corollary of this proof is that since FNNs are universal function approximators \cite{cybenko_approximation_1989,prince_understanding_2023}, so are PCNs. This fact, although probably widely believed in the PC community, did not yet have a rigorous justification to the author's best knowledge. We also remark that by changing the definition of $a_i^\ell$ in the FNN, and correspondingly changing $\mu_i^\ell$ in the PCN, to e.g. a convolutional prediction \cite{van_zwol_predictive_2024} or skip connections, the above result may be trivially extended to any hierarchical structure, i.e. not just MLPs. 

\subsection{PCNs are subsets of PCGs}
We now define a PCG, a PCN generalization introduced by \cite{salvatori_learning_2022}. {This work nicely illustrated how different network topologies, such as hierarchical networks, could be obtained by masking the PCG weight matrix. However, it was not discussed how the resulting energies and dynamics (activity and learning rules) relate to PCNs. Here, we provide a rigorous proof that a PCG with a certain choice of weight matrix is equivalent to a PCN.} This enables the statement that PCGs are supersets of PCNs (as was mentioned in \cite{van_zwol_predictive_2024}, but not yet proven). For clarity, we use tildes and Greek indices for PCG nodes and weights, and Latin indices for the PCN.
% \begin{mdframed}
    \begin{definition}
A PCG is defined by a set of nodes $a_\alpha\in\mathbb{R}$, $\alpha=1,...,N$, and an \emph{energy} $E_G=\sum_\alpha\epsilon_\alpha^2$, with $\epsilon_\alpha = a_\alpha-\mu_\alpha$ and $\mu_\alpha=f\Big(\sum_{\beta=1}^N w_{\alpha\beta}a_\beta\Big)$, where $w_{\alpha\beta}^\ell \in \mathbb{R}$ are the weights, and $f$ is a non-linear element-wise activation function. Its \emph{activity rule} is:
\begin{align}
 % \forall i, 
   % \forall i, 
   n_x <\alpha \leq  N-n_y:\quad &\hat{a}_\alpha=\underset{{a_\alpha}}{\operatorname{argmin}}\,E_G, & \emph{(training)}    \label{eq:activity_PCG_training}\\
   n_x < \alpha \leq N :\quad &\hat{a}_\alpha=\underset{{a_\alpha}}{\operatorname{argmin}}\,E_G, & \emph{(testing)} \label{eq:activity_PCG_testing}
\end{align}
with $a_\alpha=x_\alpha$ for $0<\alpha\leq n_x$, and during training $a_\alpha=y_i$ for $N-n_y< \alpha\leq N,\,1\leq i\leq n_y$. The \emph{learning rule} is:
\begin{equation}
\begin{aligned}
  % 0< \alpha\leq n_x:\quad & a_\alpha=x_\alpha\\
  % N-n_y< \alpha\leq N:\quad & a_\alpha=y_\alpha\\
    \forall \alpha,\beta:\quad &\hat{w}_{\alpha\beta}=\underset{w_{\alpha\beta}}{\operatorname{argmin}}\,E_G & \emph{(training),}
\end{aligned}
\end{equation}
with $a_\alpha=x_\alpha$ for $0< \alpha\leq n_x$ and  $a_\alpha=y_i$ for $N-n_y< \alpha\leq N$ with $1\leq i\leq n_y$.
\end{definition}
% \end{mdframed}
% \begin{mdframed}
\begin{theorem}
\label{thm:thm2}
The PCG is a superset of the PCN through its weight matrix  $\widetilde{\bm{w}}$. With layers defined by $\{n_\ell\}_{\ell=0}^L$, with $n_\ell\in\mathbb{N}_+$, $\sum_\ell n_\ell=N$, partitioning
% \begin{itemize}
%     \item
nodes $\widetilde{\bm{a}}$ into layers $\bm{a}^\ell\in\mathbb{R}^{n_\ell}$; 
and weights $\widetilde{\bm{w}}$ into block matrices $\widetilde{\bm{w}}^{\ell k}\in\mathbb{R}^{n_\ell\times n_k}$, the choice  $\widetilde{\bm w}^{\ell k} = \widetilde{\bm w}^{\ell k}\delta_{k\ell-1}\equiv\bm w^{\ell-1}$ (with $\bm w^{\ell}$ the PCN weight matrix) implies that their objective functions are equivalent:{
\begin{equation}
    E_G=E_N+C
\end{equation}
where $C$ is a constant. Moreover, their activity rules are equivalent, both during training and testing:
\begin{equation}
\begin{aligned}
 % 0<\ell <L, &
 \quad \operatorname{argmin}_{a_i^\ell} E_N = \operatorname{argmin}_{a_\alpha} E_G  &\quad \emph{(training),}\\
  % 0<\ell\leq L,&
  \quad \operatorname{argmin}_{a_i^\ell} E_N= \operatorname{argmin}_{a_\alpha} E_G &\quad \emph{(testing),}
\end{aligned}
\end{equation}
and their learning rules are equivalent:
 \begin{equation}
\begin{aligned}
% 0\leq\ell <L,&\quad
\operatorname{argmin}_{w_{ij}^\ell} E_N = \operatorname{argmin}_{w_{\alpha\beta}} E_G  &\quad \emph{(training).}
\end{aligned}
\end{equation}
}Other choices of non-zero block matrices leads to skip connections, backward (skip) connections and lateral connections.
\end{theorem}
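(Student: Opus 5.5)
The plan is to write the PCG energy in block form, substitute the layered weight choice, and read off the PCN energy plus a term that does not depend on any free variable. Index PCG nodes layer by layer: $\alpha\leftrightarrow(\ell,i)$ with $\alpha=\sum_{k<\ell}n_k+i$, so that $a_\alpha=a_i^\ell$. The first block $\ell=0$ then holds the $n_0=n_x$ input nodes, and the last block $\ell=L$ holds the $n_L=n_y$ output nodes, so the clamping conventions coincide. With $\widetilde{\bm w}^{\ell k}=\delta_{k,\ell-1}\bm w^{\ell-1}$ for $\ell\ge1$ and $\widetilde{\bm w}^{0k}=0$, the PCG prediction becomes
\[
\mu_\alpha=f\Big(\sum_k\sum_j \widetilde w^{\ell k}_{ij}a^k_j\Big)=f\Big(\sum_j w^{\ell-1}_{ij}a^{\ell-1}_j\Big)=\mu_i^\ell \quad (\ell\ge1),
\]
while $\mu_\alpha=f(0)$ for $\ell=0$. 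Splitting $E_G=\sum_\alpha\epsilon_\alpha^2$ into the $\ell=0$ block and the rest gives
\[
E_G=\sum_{i=1}^{n_0}\big(x_i-f(0)\big)^2+E_N .
\]
The first term is $C$. It is constant because the input nodes are clamped to $x$ in both training and testing, and it does not depend on any weight.

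The three equivalences then follow from one observation: adding a quantity independent of the optimization variable does not change an argmin. First, for the activity rules, the free PCG nodes are $n_x<\alpha\le N-n_y$ in training and $n_x<\alpha\le N$ in testing. Under the indexing above these are exactly the PCN nodes with $0<\ell<L$ and $0<\ell\le L$ respectively. Since $C$ depends only on $x$, $\operatorname{argmin}_{a_\alpha}E_G=\operatorname{argmin}_{a_i^\ell}E_N$. Second, for the learning rule, the variables are split into two groups. The free PCG weights $w_{\alpha\beta}$ that lie in the blocks $\widetilde{\bm w}^{\ell,\ell-1}$ coincide with the PCN weights $w^{\ell-1}_{ij}$, so their argmins agree. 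The remaining weights are set to zero by the choice of $\widetilde{\bm w}$ and are held fixed (masked), as in \cite{salvatori_learning_2022}. They are therefore not optimization variables, and the learning-rule statement should be read as applying to the unmasked entries.

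The main obstacle is making this last point precise. If the masked weights were also free, minimizing $E_G$ over them would generally drive them away from zero, and the equivalence would fail. I would therefore state explicitly that the superset relation holds for the PCG with the masked entries fixed at zero, i.e. with argmin taken over the unmasked entries only. A second, smaller subtlety is the $\ell=0$ error term $\epsilon_\alpha=x_\alpha-f(0)$, which the PCN lacks. I would handle it by absorbing it into $C$ as above; if one prefers $C=0$, it can also be removed by setting those $\epsilon$ to zero by convention.

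Finally, for the closing remark of the theorem, I would note that allowing nonzero blocks $\widetilde{\bm w}^{\ell k}$ with $k<\ell-1$ gives skip connections, $k>\ell$ gives backward connections, and $k=\ell$ gives lateral connections. In each case $\mu_i^\ell=f\big(\sum_k\sum_j\widetilde w^{\ell k}_{ij}a^k_j\big)$, which matches the corresponding generalized PCN by the same computation.
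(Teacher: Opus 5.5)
Your proposal is correct and follows essentially the same route as the paper: decompose $E_G$ into layer blocks, substitute $\widetilde{\bm w}^{\ell k}=\delta_{k,\ell-1}\bm w^{\ell-1}$, identify the leftover input-layer term $C=\sum_i (x_i-f(0))^2$ as constant because $\bm a^0$ is clamped, and conclude that all the argmins coincide. Your explicit caveat, that the learning-rule equivalence requires the off-hierarchy weights to be held fixed at zero, is a point the paper makes only in a footnote and otherwise assumes implicitly when it calls $C$ constant with respect to the learning rule; your index map $\alpha=s_{\ell-1}+i$ is also the correct form of the paper's $i=\alpha-s_\ell$.
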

% \end{mdframed}
The proof is given in \ref{sec:proof_thm2}.
% and \ref{sec:single_node_width} proves the result for layers with one node for pedagogic purposes. 
It involves defining the exact mapping of nodes and weights, for which we again provide a brief intuition. From  $\{n_\ell\}_{\ell=0}^L$, one may define a partitioning of node indices as:
\begin{equation}
\begin{aligned}
I&=\{ \underbrace{1,\dots,n_0}_{\text{layer 0}},\underbrace{n_0+1,\dots,n_0+n_1}_{\text{layer 1}},\dots,\underbrace{s_{L-1}+1,\dots, s_L}_{\text{layer }L}\},\\
\end{aligned}
\end{equation}
where $s_\ell=\sum_{k=0}^\ell n_k$. With this, the PCG weights $\widetilde{\bm{w}}$ may be partitioned into block matrices $\widetilde{\bm w}^{\ell k}$, containing weights from layer $k$ to layer $\ell$, which yields:
\begin{equation}
\widetilde{\bm{w}}=\begin{bmatrix}
 \widetilde{\bm  w}^{00} & \widetilde{\bm w}^{01} &  \dots & &  \widetilde{\bm w}^{0L} \\
 \widetilde{\bm w}^{10} & \widetilde{\bm w}^{11} &        &  & \vdots  \\ 
 \vdots            &              &  \ddots       &    &    \\
 \widetilde{\bm w}^{L0}& \dots       &        &   & \widetilde{\bm w}^{LL}    \\
\end{bmatrix}=\begin{bmatrix}
\bm 0      & \bm 0      &  \dots   & &\bm 0     \\
{{\bm w}}^{0} & & & &\vdots\\ 
& {{\bm w}}^{1} &  &  &  \\
\vdots& &  \ddots&     & \\
\bm 0&\dots &       &{{\bm w}}^{L-1}    & \bm 0  \\
\end{bmatrix}\,.
\label{eq:bigmatrix2a_1}
\end{equation}
Here, in the last equality we set the hierarchical structure using $\widetilde{\bm w}^{\ell k} = \widetilde{\bm w}^{\ell k}\delta_{k\ell-1}\equiv\bm w^{\ell-1}$, with $\delta_{ij}$ the Kronecker delta, identifying the PCN weight matrices $\bm w^\ell$. The proof then proceeds to show how this leads to {the same energy (up to a constant), and identical dynamics (activity and learning rules), both during training and testing.}

The main corollary of our theorem is that during testing, PCGs are also a superset of FNNs, by virtue of theorem \ref{thm:thm1}. As a consequence, PCGs are also universal function approximators when a hierarchical structure \eqref{eq:bigmatrix2a_1} is chosen. Whether PCGs with additional connections can also approximate any function is left for future work.

{The superset nature of PCGs is illustrated in fig. \ref{fig:big_matrix}, complementing fig. 4 in \cite{salvatori_learning_2022}. A partitioned weight matrix is shown, where block matrices are color-mapped to visualizations of connections allowed by PCGs: forward (skip) connections like FNNs, and additionally backward (skip) connections and lateral connections.}

\begin{figure}[]
\includegraphics[width=\textwidth]{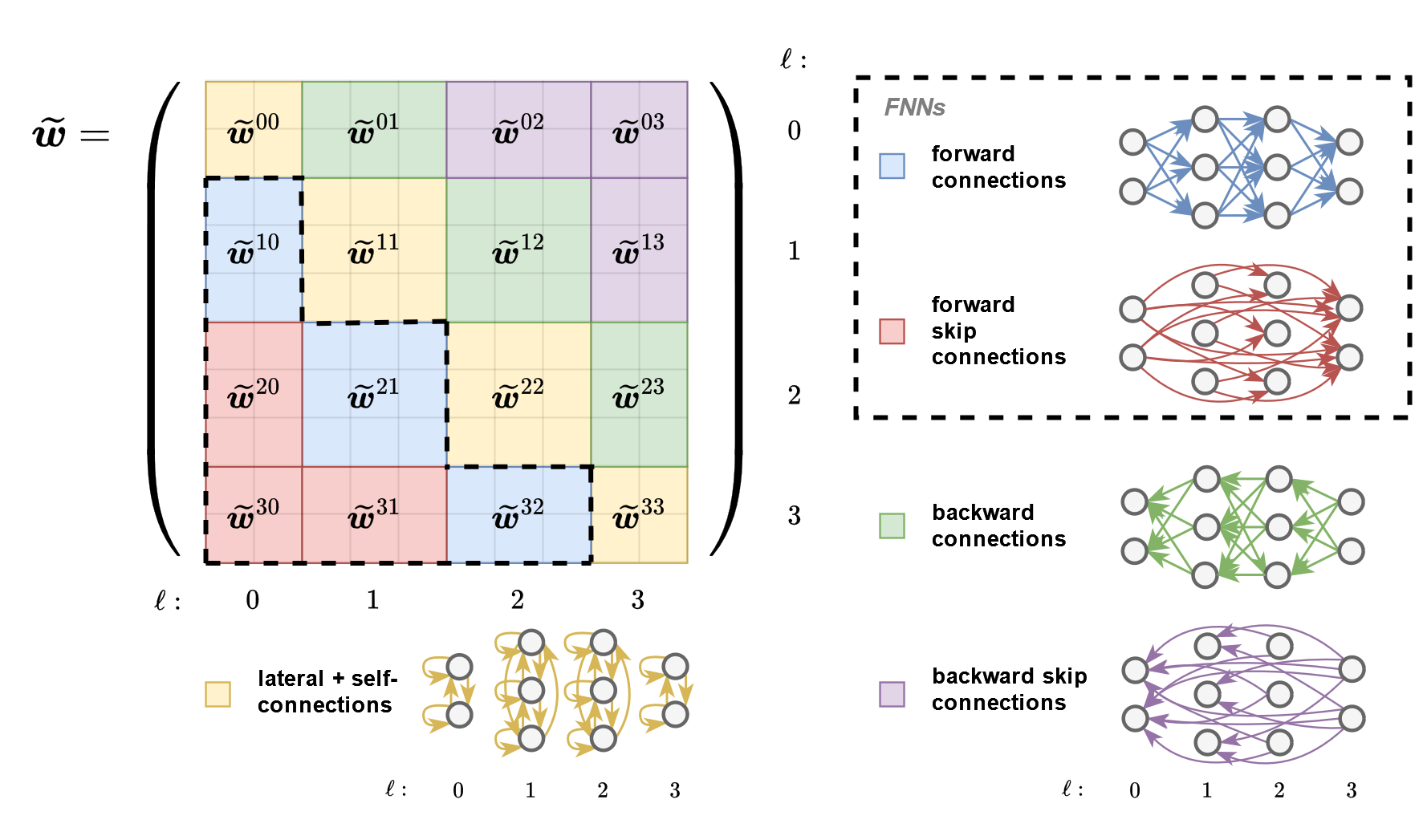}
  \caption{A PCG weight matrix partitioned into block matrices ($N=10$ nodes, partitioned by 4 layers with 2, 3, 3, and 2 nodes/layer, respectively). In traditional ANNs trained with BP, only feedforward connections (blue and red blocks) are trainable, whereas the full matrix is trainable with IL. {This figure extends fig. 4 in \cite{salvatori_learning_2022}.}}
\label{fig:big_matrix}
\end{figure}

\section{Discussion}
We have proven how PCGs form a mathematical superset of FNNs. By virtue of using IL, a more biologically plausible alternative to BP \cite{golkar_constrained_2022,song_inferring_2024}, they generalize  FNNs to {arbitrary} graphs. {This clarifies the relation between traditional neural networks and PCNs, the latter not being widely known in the broader ML community (as also recently argued by \cite{van_zwol_predictive_2024}).} 

{In the process, we showed that PCNs are \textit{equivalent} to FNNs during testing -- a subtle change from earlier work showing that PCNs \textit{converge} to FNNs. As such, this provides a rigorous argument for why the universal approximation theorem is applicable to PCNs, a statement which does not appear yet in the PC literature to the author's knowledge.}

PCGs allow a large new set of structures untrainable by BP, as was already argued by \cite{salvatori_learning_2022}. Our results rigorously show how these models includes FNNs as a special case. Moreover, fig. \ref{fig:big_matrix} clarifies how \textit{skip connections} can be seen as a part of the PCG weight matrix $\widetilde{\bm w}$ (extending fig. 4 in \cite{salvatori_learning_2022}). Such connections are the main innovation in ResNets \cite{he_deep_2016}, and are well-known provide great benefits for many ML tasks compared to standard MLPs \cite{zagoruyko_wide_2016,orhan_skip_2018}. Understanding these as a part of $\widetilde{\bm w}$, then, begs the question of whether the remainder of $\widetilde{\bm w}$ -- that is, backward (skip) connections, lateral connections and self-connections --  does, by analogy, also bring benefits. Future work should investigate this. So far, only all-to-all connected PCGs (the full matrix except self-connections) have been studied empirically \cite{salvatori_learning_2022}: these appear not to perform as well as hierarchical PCNs/FNNs, but they do outperform other all-to-all connected networks for classification on MNIST by a large margin (12-35\% better than Boltzmann machines and Hopfield networks for three datasets), which is encouraging. 

A practical limitation of current PCG implementations is that using non-feedforward connections comes at a computational cost, because gradient-based inference of nodes is relatively expensive. In FNNs, the time complexity of testing one batch is  $\mathcal{O}(LM)$, where $M = \max(\{ n^\ell n^{\ell+1}\}_{\ell})$ is the number of weights in the largest block matrix in $\widetilde{\bm{w}}$ \cite{van_zwol_predictive_2024}. In contrast, in PCGs this is $\mathcal{O}(N^2T)$ where $T$ is the number of inference steps. If sparsity of the PCG weight matrix is leveraged, this reduces to $\mathcal{O}(dNT)$ where $d$ is the number of nonzero weights. For an FNN with a comparable number of weights, one has $d\approx LM$. I.e., testing one batch is a factor $NT$ slower in a PCG. This is not necessarily undesirable, however, since increased testing time could potentially be compensated by other favorable properties of the training algorithm and topology used. {For further discussion of practical issues in PCNs and PCGs, we refer to \cite{van_zwol_predictive_2024}.}

As also mentioned in \cite{van_zwol_predictive_2024}, we emphasize that the non-feedforward connections in $\widetilde{\bm w}$ introduce a notion of \textit{recurrence} {distinct} from that in recurrent neural networks (RNNs), cf. fig. \ref{fig:big_matrix}. RNNs, made specifically to handle sequential data, have recurrence with respect to `data time', characterized by weight sharing. In PCGs, by contrast, recurrence is in `inference time', like in Hopfield networks \cite{mackay_information_2003}. This distinction was not yet highlighted in \cite{salvatori_learning_2022}.

Finally, our work underscores the value of theoretical work and rigorous mathematical arguments for studying PCNs, which is still limited in the literature -- a point also recently made by \cite{frieder_bad_2024, van_zwol_predictive_2024}. By providing theoretical justification, such work can usefully guide and constrain future experimental studies of PCNs.

\begin{ack}
% The author thanks partners, members from the HONDA project between the Honda Research Institute in Japan and Utrecht University in the Netherlands, for supporting this research. 
The author is grateful to Egon L. van den Broek and Ro Jefferson for valuable discussions and feedback on the manuscript. He also thanks Lukas Arts for helpful discussions. 
\end{ack}

{\small  % Begin small font size
\bibliographystyle{plain}
\bibliography{references}
}
% \section*{References}

% References follow the acknowledgments in the camera-ready paper. Use unnumbered first-level heading for
% the references. Any choice of citation style is acceptable as long as you are
% consistent. It is permissible to reduce the font size to \verb+small+ (9 point)
% when listing the references.
% Note that the Reference section does not count towards the page limit.
% \medskip

% {
% \small

% [1] Alexander, J.A.\ \& Mozer, M.C.\ (1995) Template-based algorithms for
% connectionist rule extraction. In G.\ Tesauro, D.S.\ Touretzky and T.K.\ Leen
% (eds.), {\it Advances in Neural Information Processing Systems 7},
% pp.\ 609--616. Cambridge, MA: MIT Press.

% [2] Bower, J.M.\ \& Beeman, D.\ (1995) {\it The Book of GENESIS: Exploring
%   Realistic Neural Models with the GEneral NEural SImulation System.}  New York:
% TELOS/Springer--Verlag.

% [3] Hasselmo, M.E., Schnell, E.\ \& Barkai, E.\ (1995) Dynamics of learning and
% recall at excitatory recurrent synapses and cholinergic modulation in rat
% hippocampal region CA3. {\it Journal of Neuroscience} {\bf 15}(7):5249-5262.
% }

%%%%%%%%%%%%%%%%%%%%%%%%%%%%%%%%%%%%%%%%%%%%%%%%%%%%%%%%%%%%

\appendix

\section{Proofs}\label{sec:appendix}
This appendix provides proofs for the two theorems in the main text.
\subsection{PCNs are FNNs during testing}\label{sec:proof_thm1}
This section proves theorem \ref{thm:thm1}.
\begin{proof}
 We have to prove \eqref{eq:equivalence}, which we do using backwards induction. The induction hypothesis is, for some $k<L$:
\begin{equation}
    \forall i:\quad {a}_i^{k}=f\Big(\sum_j w^{k-1}_{ij}a^{k-1}_j\Big)\iff \epsilon_i^k=0~.
     \label{eq:IH}
\end{equation}
The base case is $\ell=L$, by \eqref{eq:SOE_compact_2}:
\begin{equation}
\begin{aligned}
 \forall i:\quad {\epsilon}_i^L = 0  \iff {a}_i^{L}&=f\Big(\sum_j w^{L-1}_{ij}a^{L-1}_j\Big)~.
\end{aligned}
% \label{eq:dEdaL}
\end{equation}
Then, by \eqref{eq:SOE_compact_1}, for $\ell=k$:
\begin{equation}
\forall i:\quad\epsilon^{k-1}_i-\sum_{j=1}^{n_{k-1}} w^{k-1}_{ji}\epsilon_j^{k}f'\Big(\sum_m w^{k-1}_{jm}a^{k-1}_m\Big)=0 
\end{equation}
Then, by the induction hypothesis \eqref{eq:IH} $\epsilon_j^{k}=0$, so:
\begin{equation}
\forall i:\quad \epsilon^{k-1}_i=0.
\end{equation}
So, by induction the hypothesis \eqref{eq:IH} is proven for all $\ell\leq L$. 
\end{proof}

\subsection{PCNs are subsets of PCGs}\label{sec:proof_thm2}
This section proves theorem \ref{thm:thm2}.
% , first generally and section \ref{sec:proof_thm2} for single-node width, for pedagogic purposes.
% \subsubsection{General proof}
\begin{proof}
The PCN energy is:
\begin{equation}
   E_N = \sum_{\ell=0}^L\sum_{i=1}^{n_\ell} \Big[{a}_i^\ell-f\Big(\sum_{j=1}^{n_{\ell-1}}{w}_{ij}^{\ell-1}{a}^{\ell-1}_j\Big)\Big]^2,
\end{equation}
where ${\bm w}^{\ell} \in \mathbb{R}^{n_\ell \times n_{\ell+1}}$ are  matrices, and ${\bm a}^0=\bm x,\, {\bm a}^L=\bm y$. 
% The proof here proceeds in the same way, except that we must partition the PCG weight matrix in submatrices of various sizes, defined by the collection of nodes in each layer, $\{n_\ell\}_{\ell=0}^L$. 
It is useful to define \textit{the sum of node indices up to layer} $\ell$ as $s_\ell=\sum_{k=0}^\ell n_k$ 
where we additionally define $s_{-1} = 0$. Then, partition the PCG node indices $i$ as follows (cf. fig. \ref{fig:mapping}):
\begin{equation}
\begin{aligned}
I&=\{ \underbrace{1,\dots,n_0}_{I_0},\underbrace{n_0+1,\dots,n_0+n_1}_{I_1},\dots,\underbrace{s_{L-1}+1,\dots, s_L}_{I_L}\}.\\
\end{aligned}
\label{eq:partition}
\end{equation}
\begin{figure}[]
\includegraphics[width=\textwidth]{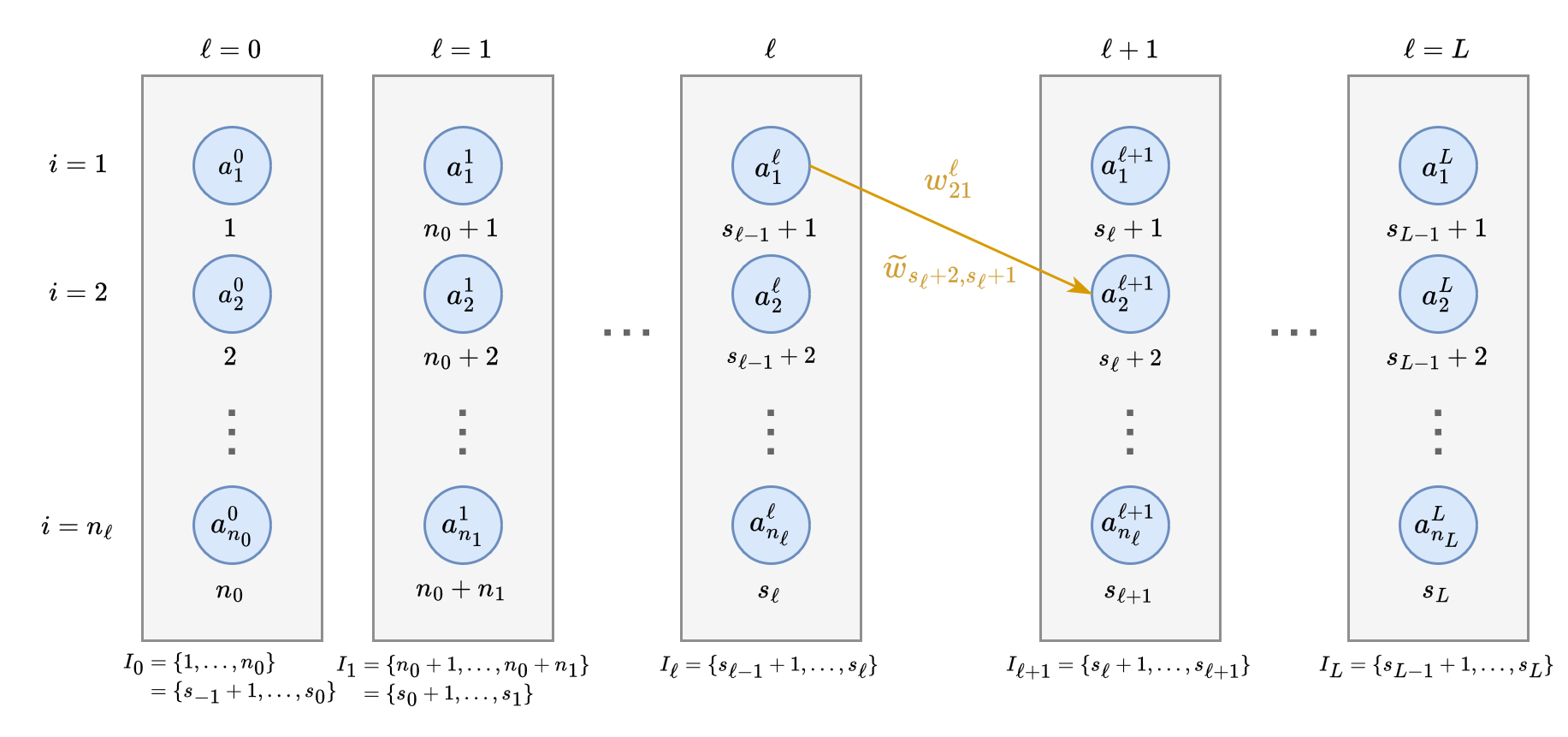}
  \caption{Mapping between PCN indices (blue nodes) and PCG indices (below nodes). Note that layers may have different widths $n_\ell$. $i$ denotes PCN index within a layer, and $I_\ell$ is defined by \eqref{eq:index_subsets}. One weight is shown in orange using PCN indices (top) and PCG indices (bottom), cf. \eqref{eq:hierarchical_pcg}.}
\label{fig:mapping}
\end{figure}
We defined $I_\ell$ as \textit{the node indices in layer} $\ell$, i.e.:
\begin{equation}
% I_\ell = \left\{ \sum_{k=0}^{\ell-1} n_k + 1, \sum_{k=0}^{\ell-1} n_k + 2, \dots, \sum_{k=0}^{\ell} n_k \right\},
I_\ell = \left\{ s_{\ell-1} + 1,s_{\ell-1} + 2, \dots, s_\ell \right\}.
\label{eq:index_subsets}
\end{equation}
Thus we have, if $\alpha\in I_\ell$, the corresponding PCN index $i(\ell) = \alpha-s_\ell$, meaning we can map PCG nodes to PCN nodes as follows:
\begin{equation}
    \widetilde{a}_{\alpha}={a}^\ell_i,
\label{eq:node_mapping}
\end{equation}
such that sums over these nodes may be written as
\begin{equation}
 \sum_{\alpha \in I_\ell} = \sum_{i=1}^{n_\ell}.
\label{eq:sum_mapping}
\end{equation}
The partition \eqref{eq:partition} also allows us to partition the weight matrix into block matrices. Defining  $\widetilde{\bm w}^{\ell k} \in \mathbb{R}^{n_\ell\times n_k}$ the full weight matrix becomes:
\begin{equation}
\widetilde{\bm{w}}=\begin{bmatrix}
 \widetilde{\bm  w}^{00} & \widetilde{\bm w}^{01} &  \dots & &  \widetilde{\bm w}^{0L} \\
 \widetilde{\bm w}^{10} & \widetilde{\bm w}^{11} &        &  & \vdots  \\ 
 \vdots            &              &  \ddots       &    &    \\
 \widetilde{\bm w}^{L0}& \dots       &        &   & \widetilde{\bm w}^{LL}    \\
\end{bmatrix}.
\label{eq:bigmatrix2a}
\end{equation}
% Observe here that each row and column is structured as \eqref{eq:partition}. 
Each matrix $\widetilde{\bm w}^{\ell k}$ contains weights from layer $\ell$ to layer $k$, cf. fig. \ref{fig:big_matrix}.
% \footnote{Unlike in traditional ANNs, one can have non-hierarchical connections, e.g. from layer 3 to layer 1 or self-connections in layer 2.} 
To obtain a hierarchical structure, one sets $\widetilde{\bm w}^{\ell k} = \widetilde{\bm w}^{\ell k}\delta_{k\ell-1}\equiv\bm w^{\ell-1}$, where $\delta_{ij}$ is the Kronecker delta, and we have found a mapping with the PCN weight matrix $\bm w^\ell$. Equivalently, this can be stated as follows: if $\alpha \in I_\ell$, then
\begin{equation}
    \widetilde{w}_{\alpha\beta}=
    \begin{dcases}
    {w}^\ell_{ij} & \text{if}\quad  \beta \in I_{\ell-1}\\
    0 & \text{else}
    \end{dcases}
    \label{eq:hierarchical_pcg}
\end{equation}
where $i=\alpha-s_\ell,\,j=\beta-s_\ell$ (cf. fig. \ref{fig:mapping}). This results in:
\begin{equation}
\widetilde{\bm{w}}=\begin{bmatrix}
\bm 0      & \bm 0      &  \dots & & \bm 0     \\
 \widetilde{\bm w}^{10} &        &        &  & \vdots  \\ 
       & \widetilde{\bm w}^{21} &        &    &    \\
 \vdots      &        &  \ddots&     &   \\
\bm 0       & \dots       &        & \widetilde{\bm w}^{LL-1}   & \bm 0   \\
\end{bmatrix}=\begin{bmatrix}
\bm 0      & \bm 0      &  \dots   & &\bm 0     \\
{{\bm w}}^{0} & & & &\vdots\\ 
& {{\bm w}}^{1} &  &  &  \\
\vdots& &  \ddots&     & \\
\bm 0&\dots &       &{{\bm w}}^{L-1}    & \bm 0  \\
\end{bmatrix}
\label{eq:bigmatrix2}
\end{equation}
% Observe that this form is the same as \eqref{eq:bigmatrix} but matrix entries have become block matrices.
Now, the PCG energy is:
\begin{equation}
E_G = \sum_{\alpha=1}^N\Big[\widetilde{a}_\alpha-f\Big(\sum_{\beta=1}^N \widetilde{w}_{\alpha\beta}\widetilde{a}_\beta\Big)\Big]^2  \label{eq:E_G}
\end{equation}
We decompose the outer and inner sum according to \eqref{eq:partition}, which we can write using \eqref{eq:index_subsets}:
$$
\begin{aligned}
% \sum_{i=0}^N  &= \sum_{i=0}^{n_0} +  \sum_{i=n_0+1}^{n_0+n_1} +\dots +  \sum_{i=\sum_{\ell=0}^{L-1}n_{\ell}}^N\\
\sum_{\alpha=1}^N &=\sum_{\alpha\in I_0} +  \sum_{\alpha\in I_1} +\dots +  \sum_{\alpha\in I_L}\\
% &= \sum_{\alpha=1}^n_1 +  \sum_{\alpha=1}^n_2 +\dots +  \sum_{\alpha=1}^n_L
\end{aligned}
$$
which yields
\begin{equation}
\begin{aligned}
E_G &= \sum_{\alpha\in I_0}\Big[\widetilde{a}_\alpha- f\Big(\sum_{\beta\in I_0} \widetilde{w}_{\alpha \beta}\widetilde{a}_\beta +  \sum_{\beta\in I_1} \widetilde{w}_{\alpha \beta}\widetilde{a}_\beta +\dots +  \sum_{\beta \in I_L} \widetilde{w}_{\alpha \beta}\widetilde{a}_j\Big) \Big]^2\\
&+\sum_{ \alpha \in I_1}\Big[\widetilde{a}_\alpha- f\Big(\sum_{\beta\in I_0} \widetilde{w}_{\alpha \beta}\widetilde{a}_\beta +  \sum_{\beta\in I_1} \widetilde{w}_{\alpha\beta}\widetilde{a}_\beta +\dots +  \sum_{\beta\in I_L} \widetilde{w}_{\alpha \beta}\widetilde{a}_\beta\Big) \Big]^2\\
&\vdots\\
&+\sum_{\alpha \in I_L}\Big[\widetilde{a}_\alpha- f\Big(\sum_{\beta\in I_0} \widetilde{w}_{\alpha \beta}\widetilde{a}_\beta +  \sum_{\beta\in I_1} \widetilde{w}_{\alpha \beta}\widetilde{a}_\beta +\dots +  \sum_{\beta \in I_L} \widetilde{w}_{\alpha \beta}\widetilde{a}_\beta\Big) \Big]^2,\\
\end{aligned}
\end{equation}
which we have written out to illustrate the decomposition into the block matrices in \eqref{eq:bigmatrix2a}. If we then use the mappings \eqref{eq:node_mapping}, \eqref{eq:sum_mapping}, and \eqref{eq:hierarchical_pcg}, this yields:
\begin{equation}
\begin{aligned}
E_G &= {\sum_{i=1}^{n_0} \Big[{a}_i^0 - f(0) \Big]^2}\\
&\quad+\sum_{i=1}^{n_1} \Big[{a}_i^1 - f\Big(\sum_{j=1}^{n_0} {w}_{ij}^{0} {a}_j^0 \Big) \Big]^2+\dots
+\sum_{i=1}^{n_L} \Big[{a}_i^L - f\Big(\sum_{j=1}^{n_{L-1}} {w}_{ij}^{L-1} {a}_j^{L-1} \Big) \Big]^2\\
&={\sum_{i=1}^{n_0} \Big[{a}_i^0 - f(0) \Big]^2}+\sum_{\ell=1}^L \sum_{i=1}^{n_\ell} \Big[{a}_i^\ell - f\Big(\sum_{j=1}^{n_{\ell-1}} {w}_{ij}^{\ell-1} {a}_j^{\ell-1} \Big) \Big]^2\\
&=E_N + C
\end{aligned}
\end{equation}

{Observe that during both training and testing, the lowest layer is clamped to the data, i.e. one always has $\forall i, a^0_i=x^0$. Hence, the extra term $C=\sum_{i} [{a}_i^0 - f(0) ]^2$ in $E_G$ can be considered constant with respect to both the activity and learning rules, since one has  (using the mappings \eqref{eq:node_mapping}, \eqref{eq:sum_mapping}, and \eqref{eq:hierarchical_pcg}):
\begin{equation}
\begin{aligned}
 0<\ell <L, &\quad \operatorname{argmin}_{a_i^\ell} E_N = \operatorname{argmin}_{a_\alpha} E_G \\
 0\leq\ell <L,&\quad \operatorname{argmin}_{w_{ij}^\ell} E_N = \operatorname{argmin}_{w_{\alpha\beta}} E_G.
\end{aligned}
\end{equation}
during training, and similarly during testing:
 \begin{equation}
\begin{aligned}
0< \ell\leq L,\quad \operatorname{argmin}_{a_i^\ell} E_N &= \operatorname{argmin}_{a_\alpha} E_G.
\end{aligned}
\end{equation}
In sum, both during training and testing, the dynamics of a PCG \textit{with weight matrix} \eqref{eq:bigmatrix2}, is exactly equal to the dynamics of the PCN, with equal energies up to a constant. \footnote{The mapping in the last layer may be checked by observing that since $n_L=n_y$ by assumption, one has $y_i=a^L_i=a_\alpha$ where $1\leq i<n_y$ and $N-n_y<\alpha \leq N \iff \sum_{k=0}^{L-1} n_k<\alpha \leq \sum_{k=0}^{L} n_k \iff \alpha \in I_L$.} As a consequence, since other choices of weight matrix leads to other architectures (e.g. all-to-all connectivity) \cite{salvatori_learning_2022}, PCGs define a superset of PCNs.\footnote{As a practical note, we mention that in implementations weight updates are typically calculated for all $i,j$ automatically using matrix multiplication. This means one should ensure weights are appropriately set to zero according to \eqref{eq:bigmatrix2} to keep the desired structure.}
}
% \qed
\end{proof}

\section{Update Rules}\label{sec:appendix_derivations}
Here we derive the gradient-based updates for the activity rule and the learning rule. To clarify the relation to other works, we do this for the two main conventions found in the literature: 
\begin{itemize}
    \item `Matrix-Activation', i.e. $\mu_i = f\left(\sum_j w_{ij}a_j\right)$. This is the convention most often considered for FNNs, used in the main text.
    \item `Activation-Matrix', i.e. $\mu_i = \sum_j w_{ij}f\left(a_j\right)$. This is often considered in the PC literature, e.g. \cite{song_can_2020,salvatori_learning_2022}.
\end{itemize}

We note that for the latter convention, the proofs above remains the same, with one slight exception in the second proof. In the final step, the constant term in $E_G$ becomes $C=\sum_{i}(a_i^0)^2$, since the non-linearities $f$ are multiplied by zero.

\subsection{PCN, matrix-activation}
% \subsubsection{PCN.}
Activations (eq. \ref{eq:SOE_compact_1}):
$$
\begin{aligned}
 \pdv{E}{a^\ell_i}&= \frac{1}{2}\sum_{k,j} 2\epsilon_j^k\Big[\delta^{k\ell}\delta_{ji}-f'\Big(\sum_{m}w^{k-1}_{jm}a^{k-1}_m\Big)\sum_m w_{jm}^{k-1} \delta^{k-1\ell}\delta_{mi}\Big]\\
&=\epsilon_i^\ell-\sum_j\epsilon_j^{\ell+1} f'\Big(\sum_{m}w^{\ell}_{jm}a^{\ell}_m\Big)w^\ell_{ji}
\end{aligned}
$$
where $\delta_{ij}$ is the Kronecker delta.
% $$
% \begin{aligned}
% \Delta a^\ell_i&= -\gamma\left[{\epsilon}_i^\ell-\sum_{j=1}^{n_{\ell}} {w}_{ji}^\ell {\epsilon}_j^{\ell+ 1}f'\left(\sum_{k=1}^{n_\ell} {w}_{jk}^\ell a_k^\ell \right)\right],   \\
% \Delta \bm a^\ell&=  -\gamma\Big(\bm{\epsilon}^\ell-\left(\bm{w}^\ell \right)^T \left[\bm{\epsilon}^{\ell+ 1}\odot f'\left( \bm{w}^\ell \bm{a}^\ell \right) \right]\Big)  
% \end{aligned}
% $$
% Note that the element-wise product requires brackets here, because we have to sum also over the $f'(.)$ factor. This is in contrast with the convention below. 
For the weights:
$$
\begin{aligned}
\pdv{E}{w^\ell_{ab}} &=-\frac{1}{2}\sum_{k,i}2\epsilon^k_if'\Big(\sum_j w^{k-1}_{ij}a_j^{k-1} \Big)'\sum_j x_j^{k-1}\delta_{ia}\delta_{jb}\delta^{k-1\ell}\\
&=-\epsilon_a^{\ell+1}f'\Big(\sum_j w^{\ell}_{aj}a_j^{\ell} \Big)a_b^\ell.
\end{aligned}
$$
% Which gives the update rules:
% $$
% \begin{aligned}
% \quad \Delta {w}^\ell_{ij}&= \alpha {\epsilon}^{\ell+1}_i f'\Big(\sum_{m}{w}^\ell_{im}a^\ell_m \Big){a}^\ell_j,\\
% \Delta \bm w^\ell&= \alpha [\bm{\epsilon}^{l+1}\odot f'\left(\bm{w}^\ell\bm{a}^\ell\right)](\bm{a}^\ell)^T
% \end{aligned}
% $$
% Here, also note that the brackets are key, because if the matrix product is done first, doing the element-wise product after will instead give $\epsilon^{\ell+1}_{\textcolor{red}{j}}$ instead of $\epsilon^{\ell+1}_{\textcolor{red}{i}}$. Also note here that one has $f'(.)$ here instead of $f(.)$ in the alternative convention below.

% \subsubsection{PCG}
% This is the same derivation as above, but leaving out all the layer indices. This gives:
% % $$
% % \begin{aligned}
% % \pdv{E}{a_i}&= \frac{1}{2}\sum_j 2\epsilon_j\Big[\delta_{ji}-f'\Big(\sum_{m}w_{jm}a_m\Big)\sum_m w_{jm}\delta_{mi}\Big]\\
% % &=\epsilon_i-\sum_j\epsilon_j f'\Big(\sum_{m}w_{jm}a_m\Big)w_{ji}\\
% % \end{aligned}
% % $$
% $$
% \begin{aligned}
% \Delta a_i&= -\gamma\left[{\epsilon}_i-\sum_{j=1}^{N} {w}_{ji}{\epsilon}_jf'\left(\sum_{m=1}^{N} {w}_{jm}a_m\right)\right]\\
% \Delta \bm a &=-\gamma\Big(\bm{\epsilon}-\bm{w} ^T \left[\bm{\epsilon}\odot f'\left( \bm{w} \bm{a}\right)\right]\Big),
% \end{aligned}
% $$
% And for weights:
% $$
% \begin{aligned}
% \Delta {w}_{ij}&= \alpha {\epsilon}_i f'\Big(\sum_{m}{w}_{im} {a}_m\Big)a_j\\ 
% \Delta \bm{w} &= \alpha [\bm{\epsilon}\odot f'\left(\bm{w}\bm{a} \right)]\bm{a}^T.
% \end{aligned}
% $$

\subsection{PCN, activation-matrix}
For completeness, we show the same derivation for the alternative convention.
% \subsubsection{PCN}
$$
\begin{aligned}
 \pdv{E}{a^\ell_i}&= \frac{1}{2}\sum_{k,j} 2\epsilon_j^k\Big[\delta^{k\ell}\delta_{ji}-\sum_{m}w^{k-1}_{jm}f'(a^{k-1}_m) \delta^{k-1\ell}\delta_{mi}\Big]\\
&=\epsilon_i^\ell-f'(a^{\ell}_i)\sum_j\epsilon_j^{\ell+1} w^{\ell}_{ji}
\end{aligned}
$$
% which gives the update rules:
% $$
% \begin{aligned}
% \Delta \bm a^\ell&= -\gamma\Big( \bm{\epsilon}^\ell- f'\left(\bm{a}^\ell\right)\odot[\left(\bm{w}^\ell\right)^T \bm{\epsilon}^{\ell+1}] \Big),\\
% \Delta a^\ell_i&= -\gamma\Big[{\epsilon^\ell_i}-f'({a^\ell_i})\sum_j{w}^{\ell}_{ji} \epsilon^{\ell+1}_j \Big].
% \end{aligned}
% $$
And for the weights:
$$
\begin{aligned}
\pdv{E}{w^\ell_{ab}} &=-\frac{1}{2}\sum_{k,i}2\epsilon^k_i \sum_j \delta_{ia}\delta_{jb}\delta^{k-1\ell} f(a_j^{k-1})\\
&=-\epsilon_a^{\ell+1}f(a^\ell_b).
\end{aligned}
$$
{
\section{Initialization}
We mention one final point relevant to using PCGs in practice. As mentioned, PCNs perform iterative gradient-based minimization of $E_N$ with respect to `hidden' (non-clamped) activations in layers $0< \ell < L$ during training. Unlike in FNNs, the result of inference (and hence, model performance) depends on \textit{initialization} of these nodes. Typically in the PC literature, this is done using a `feedforward pass' through the network, which typically appears to give better performance than random or zero initialization \cite{pinchetti_benchmarking_2024}. Writing activations at inference iteration $t$ as $a_i^\ell=a_i^\ell(t)$, the feedforward initialization is defined by:
$$
\begin{aligned}
\ell=0,\,\forall i:\quad &a^0_i(0)=x_i\\
\ell=L,\,\forall i:\quad &a^L_i(0)=y_i\\
0<\ell<L,\,\forall i: \quad&a_i^\ell(0)=\mu_i^\ell=f\Big(\sum_{j=1}^{n_{\ell-1}} w_{ij}^{\ell-1} a_i^{\ell-1}(0)\Big)
\end{aligned}
$$
For our work, this is relevant insofar as the scheme crucially depends on having a feedforward structure: for a {PCG} with both forward and non-forward connections, an analogous scheme is not obvious. This means that if one desires to get the results quoted in the literature for PCNs but using a PCG, this scheme has to be separately implemented (cf. e.g. the library implemented in \cite{van_zwol_predictive_2024}).
}

\end{document}